\newtheorem{theorem}{Theorem}[section]
\newtheorem{algorithm}{Algorithm}
\newtheorem{corollary}{Corollary}[section]
\newtheorem{definition}{Definition}[section]
\newtheorem{example}{Example}[section]
\newtheorem{lemma}{Lemma}[section]
\newtheorem{proposition}{Proposition}[section]
\newtheorem{remark}{Remark}[section]
\newenvironment{proof}[1][Proof]{\noindent\textbf{#1.} }{\hspace{\stretch{1}}$\square$}
\numberwithin{equation}{section}
\begin{document}

\begin{frontmatter}

\title{\huge The solution set of fuzzy relation equations with addition-min composition}
\tnotetext[mytitlenote]{This work is supported by the National Natural Science Foundation of China (No. 12071325).}

\author{Meng Li\footnote{\emph{E-mail address}: 1228205272@qq.com}, Xue-ping Wang\footnote{Corresponding author. xpwang1@hotmail.com; fax:
+86-28-84761393}\\
\emph{School of Mathematical Sciences, Sichuan Normal University,}\\
\emph{Chengdu 610066, Sichuan, People's Republic of China}}

\begin{abstract}
This paper deals with the resolutions of fuzzy relation equations with addition-min composition. When the fuzzy relation equations have a solution, we
first propose an algorithm to find all minimal solutions of the fuzzy relation equations and also supply an algorithm to find all maximal solutions of
the fuzzy relation equations, which will be illustrated, respectively, by numeral examples. Then we prove that every solution of the fuzzy relation
equations is between a minimal solution and a maximal one, so that we describe the solution set of the fuzzy relation equations completely.
\end{abstract}

\begin{keyword}
 Fuzzy relation equation; Addition-min composition; Minimal solution; Maximal solution; Algorithm
\end{keyword}
\end{frontmatter}

\section{Introduction}
The system of fuzzy relation equations with max-min composition was first studied by E. Sanchez \cite{Sanchez76}. The complete solution set of such
finite system is determined by the unique maximal solution and a finite number of minimal solutions \cite{DiNola89}. In recent year, the theory of fuzzy
relation inequalities with addition-min composition has been applied to BitTorrent-like peer-to-peer (P2P) file sharing system widely \cite{Li2012}. In
such file sharing system, let $ A_1, A_2, \cdots , A_n$  be $n$ users who are downloading some file datas simultaneously. Suppose that the $jth$ user
$A_j$ sends the file data with quality level $x_j$ to $A_i$, and the bandwidth between $A_i$ and $A_j$ is $a_{ij}$. Because of the bandwidth limitation,
the network traffic that $A_i$ receives from $A_j$ is actually $a_{ij}\wedge x_j$, and $A_i$ receiving the file data from other users is $a_{i1}\wedge
x_1 + \cdots +a_{i{}i-1}\wedge x_{i-1}+a_{i{i+1}}\wedge x_{i+1}+\cdots+ a_{in}\wedge x_n$. If the quality requirement of download traffic of $A_i$ is at
least $b_i$ ($b_i > 0$), then the data transmission mechanism in the file sharing systems is reduced to the system of fuzzy relation inequations with
addition-min composition as follows:
\begin{equation}\label{eq1}
\left\{\begin{array}{ll}
a_{11}\wedge x_1 + a_{12}\wedge x_{2} + \cdots + a_{1n}\wedge x_n\geq b_1,\\
a_{21}\wedge x_1 + a_{22}\wedge x_{2} + \cdots + a_{2n}\wedge x_n\geq b_2,\\
\cdots\\
a_{m1}\wedge x_1 + a_{m2}\wedge x_{2} + \cdots + a_{mn}\wedge x_n\geq b_m,
\end{array}\right.
\end{equation}
where $a_{ij},x_{j} \in [0,1]$, $b_i > 0$ $(i = 1, 2, \ldots, m; j = 1, 2, \ldots, n)$, $a_{ij} \wedge x_{j} = \min \{a_{ij}, x_j\}$, and the operation
$``+"$ is the ordinary addition. The solution set of the system (\ref{eq1}) and the related optimization problems have been
investigated\cite{Yang2020,Cao,Guo,Guu,Yang2014,SJYang,Yang2018,Yang2017,Li2021}.

In a circumstances of wireless communication, the information is transmitted by the electromagnetic wave. As we know, a high radiation intensity can
ensure a good communication quality, but it will also damage the health of humans. Therefore, in the BitTorrent-like P2P file sharing systems, we are
usually facing that the quality requirement of download traffic of every $A_i$ is exactly required to equal $b_i$ ($b_i > 0$) assigned. Then the file
sharing system (\ref{eq1}) is transformed to fuzzy relation equations with addition-min composition as follows.
\begin{equation}\label{eq2}
\left\{\begin{array}{ll}
a_{11}\wedge x_1 + a_{12}\wedge x_{2} + \cdots + a_{1n}\wedge x_n= b_1,\\
a_{21}\wedge x_1 + a_{22}\wedge x_{2} + \cdots + a_{2n}\wedge x_n= b_2,\\
\cdots\\
a_{m1}\wedge x_1 + a_{m2}\wedge x_{2} + \cdots + a_{mn}\wedge x_n=b_m.
\end{array}\right.
\end{equation}
 In \cite{SJYang}, the system (\ref{eq2}) is called the corresponding equations system of system  (\ref{eq1}), and Yang gave an equivalent condition for
 a solution of the system (\ref{eq2}) to be a minimal one. This article will focuses on the structure of the solution set of system (\ref{eq2}) by
 characterizing all minimal and maximal solutions of system (\ref{eq2}).

 The rest of this article is organized as follows. In Section 2, we present some relevant definitions and basic properties. In Section 3, we first give
 a necessary and sufficient condition for a solution of system (\ref{eq2}) being a minimal one, then we present an algorithm to find all minimal
 solutions. In Section 4, we first show a necessary and sufficient condition for a solution of system (\ref{eq2}) being a maximal solution, and then
 suggest an algorithm to find all maximal solutions. In Section 5, we first prove that every solution of system (\ref{eq2}) is between a minimal
 solution and a maximal one if system (\ref{eq2}) has a solution, then we describe the solution set of system (\ref{eq2}). A conclusion is drawn in
 Section 6.

\section{Preliminaries}

This section presents some basic definitions and results.

Let $I = \{1, 2, \cdots , m\}$ and $J = \{ 1, 2, \cdots, n \}$ be two index sets. For $x^1 = (x_1^1, x_2^1, \cdots, x_n^1)$, $x^2 = (x_1^2, x_2^2,
\cdots, x_n^2) \in [0,1]^{n}$, define ${x^1} \le {x^2}$ if and only if ${x_j}^1 \le {x_j}^2$ for arbitrary $j \in J$, and define ${x^1} < {x^2}$ if and
only if ${x_j}^1 \le {x_j}^2$ and there is a ${j_0} \in J$ such that $x_{j_0}^1 < x_{j_0}^2$. Then system (\ref{eq2}) can be tersely described as
follows:
$$\sum_{j\in J}a_{ij}\wedge x_{j}= b_i, \forall i\in I$$
or
$$A \odot {x^T} = {b^T},$$
where $A=(a_{ij})_{m \times n}$, $x=(x_1, x_2, \cdots, x_n)$, $b=(b_1, b_2, \cdots, b_m)$ and
$$(a_{i1}, a_{i2}, \cdots , a_{in}) \odot ( x_1, x_2, \cdots, x_n)^T = a_{i1} \wedge x_1 + a_{i2} \wedge x_2 + \cdots + a_{in} \wedge x_n.$$ System
(\ref{eq2}) is called solvable if there is an $x \in [0,1]^{n}$ such that $A \odot {x^T} = {b^T}$. We denote the set of all the solutions of system
(\ref{eq2}) by $\mathscr{X} = \{x \in [0,1]^{n} | A \odot x^T = b^T\}$, and $A\setminus B=\{x\in A \mid x\notin B\}$ where $A$ and $B$ are two sets.

\begin{definition}[\cite{DiNola89}]
Let $\mathscr{X}\neq \emptyset$. Then $\hat x \in \mathscr{X}$ is called a maximal solution of system (\ref{eq2}) if and only if $x \geq \hat{x}$ for
any $x \in \mathscr{X}$, implies $x = \hat{x}$, and $\check{x}\in\mathscr{X}$ is called a minimal solution of system (\ref{eq2}) if and only if $x \le
\check{x}$ for any $x \in \mathscr{X}$, implies $x = \check{x}$.
\end{definition}

Denote the set of all minimal solutions of system (\ref{eq2}) by $\check{\mathscr{X}}$ and the set of all maximal solutions of system (\ref{eq2}) by
$\hat{\mathscr{X}}$. Then the following two propositions are obvious.
\begin{proposition}\label{pr2.2}
Let $x \in \mathscr{X}$. Then we have
\begin{enumerate}
\item [(1)] $x > 0$.
\item [(2)] For any $i \in I$, $j \in J$, $x_j \ge b_i - \sum\limits_{k \in J \setminus \{j\}} {a_{ik} \wedge x_k} \ge  b_i - \sum\limits_{k \in J
    \setminus\{ j \}} {a_{ik}} $.
\item [(3)] For any $i \in I$, $j \in J$, $a_{ij} \ge b_i - \sum\limits_{k \in J\setminus\{ j \}} {a_{ik} \wedge x_k} \ge b_i - \sum\limits_{k \in J
    \setminus\{ j\}} a_{ik}$.
\end{enumerate}
\end{proposition}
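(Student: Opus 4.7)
The statement is labeled ``obvious'' in the text, and indeed each item follows essentially from the defining equation $\sum_{j\in J} a_{ij}\wedge x_j = b_i$ together with the trivial estimates $a_{ij}\wedge x_j\le x_j$ and $a_{ij}\wedge x_j\le a_{ij}$. So the plan is largely bookkeeping, and I expect no real obstacle; the only thing worth being careful about is the interpretation of $x>0$ under the partial order defined at the start of Section 2.

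For part (1), I would argue by contradiction: if $x$ is not strictly greater than the zero vector in the defined order, then since $x\in[0,1]^n$ we have $x=0$. Plugging $x=0$ into the $i$th equation gives $\sum_{j\in J} a_{ij}\wedge 0 = 0$, contradicting $b_i>0$. Hence $x>0$ (meaning $x\ne 0$ in the coordinatewise order).

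For part (2), I would fix $i\in I$ and $j\in J$, and rearrange the $i$th equation to get
\[
a_{ij}\wedge x_j \;=\; b_i - \sum_{k\in J\setminus\{j\}} a_{ik}\wedge x_k.
\]
Since $a_{ij}\wedge x_j\le x_j$, the left inequality follows immediately. The right inequality then comes from replacing each $a_{ik}\wedge x_k$ by the upper bound $a_{ik}$ in the subtracted sum, which only enlarges what is subtracted and hence weakens the bound.

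Part (3) is exactly parallel: from the same rearrangement, use $a_{ij}\wedge x_j\le a_{ij}$ for the first inequality, and the same termwise estimate $a_{ik}\wedge x_k\le a_{ik}$ for the second. So the write-up would be to handle (1) separately via the contradiction above, then treat (2) and (3) jointly, noting that both stem from the identity $a_{ij}\wedge x_j = b_i-\sum_{k\ne j} a_{ik}\wedge x_k$ combined with the two trivial upper bounds on the minimum.
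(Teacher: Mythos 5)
Your proof is correct: the paper offers no proof of this proposition (it is declared ``obvious'' immediately before its statement), and your argument --- $x=0$ forces $b_i=0$ for part (1), and the identity $a_{ij}\wedge x_j = b_i-\sum_{k\in J\setminus\{j\}}a_{ik}\wedge x_k$ combined with $a_{ij}\wedge x_j\le x_j$, $a_{ij}\wedge x_j\le a_{ij}$ and $a_{ik}\wedge x_k\le a_{ik}$ for parts (2) and (3) --- is precisely the intended routine verification. Your care in reading $x>0$ as $x\neq 0$ under the paper's componentwise order (rather than strict positivity of every coordinate, which would be false) is the right call.
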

\begin{proposition}\label{prop3}
Let $x^1, x^2\in \mathscr{X}$. If $x^1\leq x\leq x^2$, then $x\in \mathscr{X}$.
\end{proposition}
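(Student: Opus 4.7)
The plan is to exploit two elementary facts: the map $t\mapsto a\wedge t$ is nondecreasing on $[0,1]$ for every fixed $a\in[0,1]$, and ordinary addition preserves the order in each argument. Together these give componentwise monotonicity of the map $x\mapsto \sum_{j\in J}a_{ij}\wedge x_j$ for every $i\in I$. Since $b_i$ is simultaneously the value of this map at $x^1$ and at $x^2$, squeezing $x$ between $x^1$ and $x^2$ forces the map to take the same value at $x$, which is exactly the statement $x\in\mathscr{X}$.

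Concretely, I would fix an arbitrary $i\in I$ and $j\in J$. From $x^1_j\le x_j\le x^2_j$ the monotonicity of the minimum operation yields
\[
a_{ij}\wedge x^1_j\;\le\;a_{ij}\wedge x_j\;\le\;a_{ij}\wedge x^2_j.
\]
Summing these inequalities over $j\in J$ and invoking $x^1,x^2\in\mathscr{X}$ gives
\[
b_i=\sum_{j\in J}a_{ij}\wedge x^1_j\;\le\;\sum_{j\in J}a_{ij}\wedge x_j\;\le\;\sum_{j\in J}a_{ij}\wedge x^2_j=b_i,
\]
so the middle sum must equal $b_i$. Since $i$ was arbitrary, $A\odot x^T=b^T$, i.e.\ $x\in\mathscr{X}$.

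There is essentially no obstacle here: the proposition is a standard order-convexity observation that follows at once from the coordinatewise monotonicity of the composition $\odot$ together with the fact that the endpoints $x^1$ and $x^2$ attain the same target vector $b$. The only thing to be careful about is that one uses exact equalities rather than inequalities at the endpoints — which is precisely why the analogous statement for the inequality system (\ref{eq1}) would only give $\ge b_i$ rather than $=b_i$ — and that is already delivered by the hypothesis $x^1,x^2\in\mathscr{X}$.
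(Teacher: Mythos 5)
Your proof is correct and is precisely the squeeze argument via coordinatewise monotonicity of $x\mapsto\sum_{j\in J}a_{ij}\wedge x_j$ that the paper implicitly relies on when it declares this proposition ``obvious'' without writing a proof. Nothing is missing, and your remark about why the argument delivers equality here but only $\ge b_i$ for system (\ref{eq1}) is a correct and worthwhile observation.
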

\section{Minimal solutions of system (\ref{eq2})}
In this section, we first prove a necessary and sufficient condition for a solution of system (\ref{eq2}) being a minimal one if
$\mathscr{X}\neq\emptyset$, then we propose an algorithm to determine $\check{\mathscr{X}}$.

The following lemma is an equivalent description of Lemma 14 in \cite{SJYang}.
\begin{lemma}\label{le3.1}
Let $x = (x_1, x_2, \cdots, x_n)\in \mathscr{X}$. Then $x\in \check{\mathscr{X}}$ if and only if for any $j \in J$, there is an $i \in I$ such that $x_j
\le a_{ij}$.
\end{lemma}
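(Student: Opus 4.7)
The plan is to prove the equivalence directly in two steps, both times exploiting that every equation $\sum_k a_{ik}\wedge x_k=b_i$ is a sum of nonnegative terms. Hence if $y\le x$ are both solutions, then subtracting the $i$-th equations expresses $0$ as a sum of the nonnegative quantities $a_{ik}\wedge x_k - a_{ik}\wedge y_k$, forcing each of them to vanish. This cancellation is the engine driving the sufficiency direction, while the necessity direction will be handled by contraposition, constructing a strictly smaller solution.

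For sufficiency, I would suppose the coordinate-wise condition holds, fix $y\in\mathscr{X}$ with $y\le x$, and show $y_j=x_j$ for every $j\in J$. For each $j$ the hypothesis supplies some $i=i(j)\in I$ with $x_j\le a_{ij}$; then $a_{ij}\wedge x_j=x_j$, and since $y_j\le x_j\le a_{ij}$ also $a_{ij}\wedge y_j=y_j$. Applying the cancellation argument above to the $i(j)$-th equation then forces in particular the $j$-th term, which equals $x_j-y_j$, to vanish. Ranging over $j$ yields $y=x$, so $x$ is minimal.

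For necessity, I would argue by contrapositive: if some $j_0\in J$ has $x_{j_0}>a_{ij_0}$ for every $i\in I$, I would exhibit $y\in\mathscr{X}$ with $y<x$, contradicting minimality. The natural choice leaves every coordinate untouched except $x_{j_0}$, which I would lower to $\alpha=\max_{i\in I}a_{ij_0}$. Finiteness of $I$ guarantees this maximum is attained and $\alpha<x_{j_0}$; then for every $i$ the identity $a_{ij_0}\wedge\alpha=a_{ij_0}=a_{ij_0}\wedge x_{j_0}$ shows that the $i$-th left-hand side is unchanged, so $y\in\mathscr{X}$.

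I expect no serious obstacle. The small verifications are that $\alpha\ge 0$ and $\alpha<x_{j_0}$, so $y\in[0,1]^n$ and $y\ne x$, and these are immediate from the assumption and the finiteness of $I$. The essential content of the lemma is conceptual: minimality of $x$ is equivalent to every coordinate being pinned from above by some row of the coefficient matrix $A$, since any coordinate with slack relative to every row can be safely decreased without affecting any equation.
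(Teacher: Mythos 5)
Your proof is correct. Note, however, that the paper does not prove this lemma at all: it is stated as ``an equivalent description of Lemma 14 in [Yang, \emph{IEEE Trans.\ Fuzzy Systems} 26 (2018)]'' and imported by citation, so there is no in-paper argument to compare against. Your two directions are both sound and are the natural ones. For sufficiency, the cancellation step is valid because $y\le x$ makes every difference $a_{ik}\wedge x_k-a_{ik}\wedge y_k$ nonnegative while the two sums both equal $b_i$, so each difference vanishes, and the choice $i=i(j)$ with $y_j\le x_j\le a_{ij}$ turns the $j$-th difference into $x_j-y_j$. For necessity, your construction of $y$ by lowering $x_{j_0}$ to $\max_{i\in I}a_{ij_0}$ leaves every term $a_{ij_0}\wedge x_{j_0}=a_{ij_0}$ unchanged, so $y$ is a strictly smaller solution; this is essentially the same device the authors do use later, in the proof of Theorem 5.1, where they lower all coordinates in $J_1=\{j\mid x_j>\max_i a_{ij}\}$ simultaneously to produce a minimal solution below a given one. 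So your argument is consistent with the paper's toolkit and fills in a proof the paper omits.
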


Let $\hat{\alpha} = (\hat{\alpha}_1, \hat{\alpha}_2, \cdots, \hat{\alpha}_n)$ with
\begin{equation}\label{eq3}
 \hat{\alpha}_j= \max \{a_{ij}|i\in I\}.
\end{equation}
Then from Lemma \ref{le3.1}, we have the following remark.
\begin{remark}\label{re3.1}
Let $x = (x_1, x_2, \cdots, x_n)\in \mathscr{X}$. Then $x\in \check{\mathscr{X}}$ if and only if $x\leq \hat{\alpha}$.
\end{remark}

Let $\check{\alpha}= (\check{\alpha}_1, \check{\alpha}_2, \cdots, \check{\alpha}_n)$ with
\begin{equation}\label{eq4}
 \check{\alpha}_j = \max \{0, b_i-\sum \limits_{k \in J \setminus \{ j \}}a_{ik}|i\in I\}.
\end{equation}
Then we have the following proposition.
\begin{proposition}\label{pr3.1}
For any $x = (x_1, x_2, \cdots, x_n)\in \mathscr{X}$, we have $x \geq \check{\alpha}$.
\end{proposition}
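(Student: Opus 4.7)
The plan is to derive the inequality $x \ge \check{\alpha}$ coordinate-wise directly from Proposition \ref{pr2.2}, which already packages the nontrivial work. Fix an arbitrary solution $x = (x_1, \ldots, x_n) \in \mathscr{X}$ and an arbitrary index $j \in J$; my goal is to show $x_j \ge \check{\alpha}_j$.

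First I would record the two lower bounds that $\check{\alpha}_j$ is the maximum of. Since $x \in \mathscr{X}$, item (1) of Proposition \ref{pr2.2} gives $x_j > 0$, so in particular $x_j \ge 0$. Next, for each $i \in I$, the second chain of inequalities in item (2) of Proposition \ref{pr2.2} yields $x_j \ge b_i - \sum_{k \in J \setminus \{j\}} a_{ik}$. Thus every single quantity appearing inside the maximum that defines $\check{\alpha}_j$ in (\ref{eq4}) is a lower bound for $x_j$.

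Taking the maximum over the set $\{0\} \cup \{b_i - \sum_{k \in J\setminus\{j\}} a_{ik} \mid i \in I\}$ therefore preserves the inequality, giving
$$x_j \;\ge\; \max\Bigl\{0,\; b_i - \sum_{k \in J\setminus\{j\}} a_{ik} \,\Bigm|\, i \in I \Bigr\} \;=\; \check{\alpha}_j.$$
Since $j \in J$ was arbitrary, this shows $x \ge \check{\alpha}$ componentwise, which is exactly the claim.

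There is no real obstacle here: the statement is essentially a repackaging of Proposition \ref{pr2.2}(1)--(2), combined with the observation that $\check{\alpha}_j$ is defined precisely as the maximum of the uniform lower bounds those items supply. The only thing to be careful about is making sure the $0$ term in the definition of $\check{\alpha}_j$ is handled (so that $\check{\alpha}_j$ is well-defined and nonnegative even when all the expressions $b_i - \sum_{k \neq j} a_{ik}$ happen to be negative); this is covered by the $x_j > 0$ bound from Proposition \ref{pr2.2}(1).
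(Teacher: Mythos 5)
Your proposal is correct and follows exactly the same route as the paper's proof: both deduce $x_j \ge 0$ from Proposition \ref{pr2.2}(1) and $x_j \ge b_i - \sum_{k \in J\setminus\{j\}} a_{ik}$ from Proposition \ref{pr2.2}(2), then pass to the maximum defining $\check{\alpha}_j$ in (\ref{eq4}). Yours is simply a more explicit write-up of the identical argument.
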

\begin{proof}
Let $x = (x_1, x_2, \cdots, x_n)\in \mathscr{X}$. According to Proposition \ref{pr2.2}, we have $x_j\geq0$ and $x_j \ge b_i - \sum\limits_{k \in
J\setminus\{ j \}} {a_{ik}}$ for any $i \in I$, $j \in J$, hence, $x_j\geq \check{\alpha}_j$.
\end{proof}

 Remark \ref{re3.1} and Proposition \ref{pr3.1} imply the following two remarks.
 \begin{remark}\label{re3.2}
For any $x = (x_1, x_2, \cdots, x_n)\in \check{\mathscr{X}}$, we have $\check{\alpha}\leq x\leq \hat{\alpha}$.
\end{remark}
\begin{remark}\label{re3.3}
If $\check{\alpha}= (\check{\alpha}_1, \check{\alpha}_2, \cdots, \check{\alpha}_n)\in \mathscr{X}$. Then $\check{\alpha}$ is the unique minimal solution
of system (\ref{eq2}).
\end{remark}
\begin{definition}\label{de3.1}
For any $j\in J$, denote $|\{a_{ij}|\check {\alpha}_j< a_{ij}, i\in I\}|=t_j$ and $Q_{j} = \{q_{0j}, q_{1j}, q_{2j}, \cdots, q_{t_jj}\}$ with $\check
{\alpha}_j= q_{0j}<  q_{1j} <  q_{2j}< \cdots < q_{t_jj} = \hat{\alpha}_j$, where $q_{kj}\in \{a_{ij}| i\in I\} (k= 1, 2, \cdots, t_j)$.
\end{definition}

Notice that if $t_j=0$ in Definition \ref{de3.1} then $\check {\alpha}_j = \hat{\alpha}_j$.

Denote
 \begin{equation}\label{eqj}
 J^*=\{j\in J|\check {\alpha}_j=\hat{\alpha}_j\},
  \end{equation}
and $K = \{(k_1, k_2, \cdots, k_n)|k_j\in K_j\mbox{ for any }j\in J\}$ with
\begin{equation}\label{eq5}
   K_j= \left\{\begin{array}{ll}
\{0\}, j\in J^*,\\ \\
\{k|\check {\alpha}_j< q_{kj}\leq \hat{\alpha}_j \mbox{ and } q_{kj}\in Q_j\}, j\in J\setminus J^*.
\end{array}\right.
\end{equation}
Then from Lemma \ref{le3.1}, we have the following corollary.
\begin{corollary}\label{Co3.1}
 Let $x = (x_1, x_2, \cdots, x_n)\in \mathscr{X}$. Then $x \in\check{\mathscr{X}}$ if and only if there exists an $(k_1, k_2, \cdots, k_n)\in K$ such
 that $x_j\leq  q_{k_jj}$ for any $j\in J$.
\end{corollary}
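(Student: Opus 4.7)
The plan is to reduce both directions to Lemma \ref{le3.1}, which already characterizes minimality by the existence of an index $i$ majorizing each coordinate. The set $K$ and the ladders $Q_j$ are essentially a bookkeeping device that restricts the possible values of such majorants to the discrete set $\{a_{ij}:i\in I\}\cap [\check{\alpha}_j,\hat{\alpha}_j]$, so the content of the corollary is just a rephrasing of Lemma \ref{le3.1} in terms of these ladders.

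For the forward implication, I would start from $x\in\check{\mathscr{X}}$ and invoke Remark \ref{re3.1} to obtain $x_j\le\hat{\alpha}_j$ for every $j\in J$. Then a concrete choice of $(k_1,\dots,k_n)$ is forced by whether $j\in J^*$: if $j\in J^*$, the only admissible index is $k_j=0$, and this works because $q_{0j}=\check{\alpha}_j=\hat{\alpha}_j\ge x_j$; if $j\in J\setminus J^*$, set $k_j=t_j$ so that $q_{k_j j}=\hat{\alpha}_j\ge x_j$. Thus $(k_1,\dots,k_n)\in K$ with $x_j\le q_{k_j j}$ for all $j$.

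For the converse, assume $x\in\mathscr{X}$ and a tuple $(k_1,\dots,k_n)\in K$ with $x_j\le q_{k_j j}$ is given. By Lemma \ref{le3.1} it suffices to produce, for each $j$, some $i\in I$ with $x_j\le a_{ij}$. I would split on whether $j\in J^*$: in that case $q_{k_j j}=\hat{\alpha}_j$, which is attained by some $a_{ij}$ since $\hat{\alpha}_j$ is defined as the maximum in (\ref{eq3}); otherwise $q_{k_j j}\in Q_j\setminus\{q_{0j}\}$, and by Definition \ref{de3.1} every such $q_{k_j j}$ belongs to $\{a_{ij}\mid i\in I\}$, so again one finds $i\in I$ with $a_{ij}=q_{k_j j}\ge x_j$.

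The only subtle point, and the main thing to verify, is the claim that each $q_{k_j j}$ appearing in the reverse direction is actually realized by some row index $i$; this is immediate from the definition of $Q_j$ but deserves an explicit sentence so the reader sees why the candidate $i$ exists. Beyond that, the argument is essentially a direct translation between Lemma \ref{le3.1} and the definition of $K$, with no genuine obstacle.
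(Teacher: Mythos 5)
Your proposal is correct and takes essentially the same route as the paper: both directions are reduced to Lemma \ref{le3.1}, with the converse resting on the observation that each $q_{k_jj}$ (including $q_{0j}=\hat{\alpha}_j$ when $j\in J^*$) is realized as some $a_{ij}$. The only cosmetic difference is that in the forward direction you invoke Remark \ref{re3.1} and take the maximal admissible index $k_j=t_j$ explicitly, whereas the paper extracts $k_j$ from the majorizing row $i$ given by Lemma \ref{le3.1}; both are valid.
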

\begin{proof}
Let $x = (x_1, x_2, \cdots, x_n)\in\check{\mathscr{X}}$. By Lemma \ref{le3.1} and Remark \ref{re3.2}, for any $j\in J$ there is an $i\in I$ such that
$\check {\alpha}_j\leq  x_j\leq a_{ij}\leq\hat{\alpha}_j$, which means that $\check {\alpha}_j=x_j=a_{ij}=\hat{\alpha}_j$ for any $j\in J^*$ and $\check
{\alpha}_j\leq x_j\leq a_{ij}\leq\hat{\alpha}_j$ for any $j\in J\setminus J^*$. Then $K_j\neq \emptyset$. Therefore, there exists a $k=(k_1, k_2,
\cdots, k_n)\in K$ such that $x_j\leq  q_{{k_j}j}$ for any $j\in J$.

Conversely, suppose that there is a $k=(k_1, k_2, \cdots, k_n)\in K$ such that $x_j\leq  q_{k_jj}$ for any $j\in J$. It is obvious that for any $j\in
J$, there is an $i\in I$ such that $x_j\leq q_{k_jj}=a_{ij}$ and $x\in \mathscr{X}$, then $x\in \check{\mathscr{X}}$ by Lemma \ref{le3.1}.
\end{proof}

Furthermore, we have the following theorem.
\begin{theorem}\label{th3.1}
Let $x = (x_1, x_2, \cdots, x_n)$. Then $x\in \check{\mathscr{X}}$ if and only if there is an $(k_1, k_2, \cdots, k_n)\in K$ such that $x$ satisfies the
following linear equations:
\begin{equation}\label{eq6}\left\{\begin{array}{ll}
x_{j} = q_{0j}, j\in J^*,\\ \\
q_{({k_j-1})j} \leq x_{j} \leq q_{k_jj}, j\in J\setminus J^*,\\ \\
\sum \limits_{j\in J^*}a_{ij}\wedge q_{0j}+\sum \limits_{j\in J\setminus J^*}[\delta_{ij}\cdot x_j + (1 - \delta_{ij}) \cdot a_{ij}] = b_i, i\in I
\end{array}\right.
\end{equation}
where the operation $``\cdot"$ represents the ordinary multiplication and $$\delta_{ij} = \left\{ \begin{array}{l}
1, q_{k_jj}\leq a_{ij},\\
0, a_{ij}\leq q_{({k_j-1})j}
\end{array} \right.$$
for all $i\in I$, $j\in J\setminus J^*$.
\end{theorem}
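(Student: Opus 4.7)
The plan is to show that, once the localization $q_{(k_j-1)j} \le x_j \le q_{k_jj}$ is imposed, the third line of (\ref{eq6}) reduces to the original equation $\sum_{j\in J} a_{ij}\wedge x_j = b_i$ rewritten as an affine function of $x$. Together with Corollary \ref{Co3.1} and Remark \ref{re3.1}, this will yield both implications of the theorem.

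First I would establish the key identity: for $j\in J\setminus J^*$ with $q_{(k_j-1)j}\le x_j\le q_{k_jj}$, the quantity $\delta_{ij}\cdot x_j+(1-\delta_{ij})\cdot a_{ij}$ equals $a_{ij}\wedge x_j$. This requires first checking that $\delta_{ij}$ is well defined: by Definition \ref{de3.1}, every $a_{ij}$ either lies in $\{q_{1j},\ldots,q_{t_jj}\}$ or satisfies $a_{ij}\le \check{\alpha}_j=q_{0j}$, so $a_{ij}$ cannot lie strictly between $q_{(k_j-1)j}$ and $q_{k_jj}$, and exactly one of the two defining conditions for $\delta_{ij}$ applies. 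A direct case split on $\delta_{ij}\in\{0,1\}$ then gives the identity, and for $j\in J^*$ the substitution $x_j=q_{0j}$ similarly yields $a_{ij}\wedge x_j=a_{ij}\wedge q_{0j}$. Thus the third line of (\ref{eq6}) is equivalent to $A\odot x^T=b^T$ under the interval constraints.

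For the forward direction, assuming $x\in\check{\mathscr{X}}$, Corollary \ref{Co3.1} provides $(k_1,\ldots,k_n)\in K$ with $x_j\le q_{k_jj}$, and Remark \ref{re3.2} gives $x_j\ge\check{\alpha}_j=q_{0j}$; in particular, $x_j=q_{0j}$ whenever $j\in J^*$. For $j\in J\setminus J^*$, I would then replace $k_j$ by the smallest element of $K_j$ satisfying $x_j\le q_{k_jj}$. By minimality either $k_j=1$, in which case $q_{(k_j-1)j}=q_{0j}\le x_j$ by Remark \ref{re3.2}, or $x_j>q_{(k_j-1)j}$. Either way both interval bounds hold, and the third line is the already-known membership relation $x\in\mathscr{X}$.

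The converse is immediate: the third line combined with the identity above gives $\sum_{j\in J}a_{ij}\wedge x_j = b_i$, so $x\in\mathscr{X}$, while $x_j\le q_{k_jj}\le\hat{\alpha}_j$ (including on $J^*$, where $q_{0j}=\hat{\alpha}_j$) gives $x\le\hat{\alpha}$, and Remark \ref{re3.1} concludes $x\in\check{\mathscr{X}}$. The main obstacle, and essentially the only point requiring care, will be the bookkeeping around choosing a minimal $k_j$ in the forward direction so that both interval bounds hold simultaneously while still ensuring $(k_1,\ldots,k_n)\in K$.
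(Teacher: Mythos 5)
Your proposal is correct and follows essentially the same route as the paper: decompose the sum over $J^*$ and $J\setminus J^*$, use the observation that $\delta_{ij}\cdot x_j+(1-\delta_{ij})\cdot a_{ij}=a_{ij}\wedge x_j$ under the interval constraints, and invoke Corollary \ref{Co3.1} (equivalently Remark \ref{re3.1}) for minimality. In fact you are slightly more careful than the paper on two points it glosses over, namely the well-definedness of $\delta_{ij}$ and the choice of a minimal $k_j$ guaranteeing the lower bound $q_{(k_j-1)j}\le x_j$.
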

 \begin{proof} Suppose that $x\in \check{\mathscr{X}}$. By Corollary \ref{Co3.1} and Formula (\ref{eq5}), there is an $(k_1, k_2, \cdots, k_n)\in K$
 such that $q_{({k_j-1})j} \leq x_{j} \leq q_{k_jj}$ for any $j\in  J\setminus J^*$ and $x_{j}=q_{0j}$ for any $j\in J^*$. Then for any $i\in I$, we
 have
 \begin{eqnarray*}
 b_i &=& \sum \limits_{j\in J} a_{ij}\wedge x_j
 \\&=&\sum \limits_{j\in J^*}a_{ij}\wedge x_{j}+\sum \limits_{j\in J\setminus J^*}a_{ij}\wedge x_{j}
 \\&=&\sum \limits_{j\in J^*}a_{ij}\wedge q_{0j}+\sum \limits_{j\in J\setminus J^*, q_{k_jj}\leq a_{ij}}a_{ij}\wedge x_j+ \sum \limits_{j\in J\setminus
 J^*, a_{ij}\leq q_{({k_j-1})j}}a_{ij}\wedge x_j
 \\&=& \sum \limits_{j\in J^*}a_{ij}\wedge q_{0j}+\sum \limits_{j\in J\setminus J^*, q_{k_jj}\leq a_{ij}}x_j+ \sum \limits_{j\in J\setminus J^*,
 a_{ij}\leq q_{({k_j-1})j}}a_{ij}
 \\ &=&\sum \limits_{j\in J^*}a_{ij}\wedge q_{0j}+\sum \limits_{j\in J\setminus J^*}\delta_{ij}\cdot x_j + \sum \limits_{j\in J\setminus J^*}(1 -
 \delta_{ij})\cdot a_{ij}
 \\&=&\sum \limits_{j\in J^*}a_{ij}\wedge q_{0j}+\sum \limits_{j\in J\setminus J^*}[\delta_{ij}\cdot x_j + (1 - \delta_{ij}) \cdot a_{ij}],
 \end{eqnarray*}
where the operation $``\cdot"$ represents the ordinary multiplication and $$\delta_{ij} = \left\{ \begin{array}{l}
1, q_{k_jj}\leq a_{ij},\\
0, a_{ij}\leq q_{({k_j-1})j}
\end{array} \right.$$
 for all $i\in I$, $j\in J\setminus J^*$.

 Now, suppose that there is an $(k_1, k_2, \cdots, k_n)\in K$ such that $x$ satisfies the linear equations (\ref{eq6}). Then for any $i\in I$, we have
  \begin{eqnarray*}
  b_i&=&\sum \limits_{j\in J^*}a_{ij}\wedge q_{0j}+\sum \limits_{j\in J\setminus J^*}[\delta_{ij}\cdot x_j + (1 - \delta_{ij}) \cdot a_{ij}]
  \\&=&\sum \limits_{j\in J^*}a_{ij}\wedge x_j+\sum \limits_{j\in J\setminus J^*}\delta_{ij}\cdot x_j + \sum \limits_{j\in J\setminus J^*}(1 -
  \delta_{ij})\cdot a_{ij}
  \\&=& \sum \limits_{j\in J^*}a_{ij}\wedge x_j+\sum \limits_{j\in J\setminus J^*, q_{k_jj}\leq a_{ij}}x_j+ \sum \limits_{j\in J\setminus J^*,
  a_{ij}\leq q_{({k_j-1})j}}a_{ij}
   \\&=&\sum \limits_{j\in J^*}a_{ij}\wedge x_j+\sum \limits_{j\in J\setminus J^*, q_{k_jj}\leq a_{ij}}a_{ij}\wedge x_j+ \sum \limits_{j\in J\setminus
   J^*, a_{ij}\leq q_{({k_j-1})j}}a_{ij}\wedge x_j
 \\&=&\sum \limits_{j\in J^*}a_{ij}\wedge x_j+\sum \limits_{j\in J\setminus J^*, q_{k_jj}\leq a_{ij}}a_{ij}\wedge x_j+ \sum \limits_{j\in J\setminus
 J^*, a_{ij}\leq q_{({k_j-1})j}}a_{ij}\wedge x_j
 \\&=&\sum \limits_{j\in J^*}a_{ij}\wedge x_{j}+\sum \limits_{j\in J\setminus J^*}a_{ij}\wedge x_{j}
  \\&=&\sum \limits_{j\in J} a_{ij}\wedge x_j.
 \end{eqnarray*}
 Hence $x\in \mathscr{X}$. By Corollary \ref{Co3.1}, we get $x\in \check{\mathscr{X}}$.
\end{proof}

Moreover, the proof of Theorem \ref{th3.1} implies the following theorem.
\begin{theorem}\label{th3.2}
For any $(k_1, k_2, \cdots, k_n)\in K$, if $x= (x_1, x_2, \cdots, x_n)$ satisfies the equations (\ref{eq6}), then $x\in \check{\mathscr{X}}$.
\end{theorem}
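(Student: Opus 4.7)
The plan is to recognize that Theorem \ref{th3.2} is essentially the ``if'' direction of Theorem \ref{th3.1}, but stated without the a priori assumption that $x$ lies in $\check{\mathscr{X}}$. Given any $(k_1,k_2,\ldots,k_n)\in K$ and any $x$ satisfying (\ref{eq6}), I would first verify directly that $\sum_{j\in J}a_{ij}\wedge x_j=b_i$ for every $i\in I$, so that $x\in\mathscr{X}$, and then invoke Corollary \ref{Co3.1} to upgrade this to $x\in\check{\mathscr{X}}$.

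The key computation is to show that the ``linearized'' summand $\delta_{ij}\cdot x_j+(1-\delta_{ij})\cdot a_{ij}$ coincides with $a_{ij}\wedge x_j$ under the range constraints imposed by (\ref{eq6}). For $j\in J\setminus J^*$ with $\delta_{ij}=1$, the constraint $x_j\leq q_{k_jj}\leq a_{ij}$ gives $a_{ij}\wedge x_j=x_j=\delta_{ij}\cdot x_j$; for $\delta_{ij}=0$, the constraint $a_{ij}\leq q_{(k_j-1)j}\leq x_j$ gives $a_{ij}\wedge x_j=a_{ij}=(1-\delta_{ij})\cdot a_{ij}$. For $j\in J^*$, the first line of (\ref{eq6}) forces $x_j=q_{0j}$, so $a_{ij}\wedge x_j=a_{ij}\wedge q_{0j}$, matching the corresponding summand on the left-hand side of the third line of (\ref{eq6}). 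Summing these identities over $j\in J$ and comparing with the third line of (\ref{eq6}) yields $b_i=\sum_{j\in J}a_{ij}\wedge x_j$, i.e.\ $x\in\mathscr{X}$.

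The subtle point, and what I expect to be the main obstacle, is verifying that the two cases in the definition of $\delta_{ij}$ really exhaust all $(i,j)$ with $j\in J\setminus J^*$, so that the bookkeeping above is complete. This relies on the construction of $Q_j$: for every $i\in I$, either $a_{ij}\leq\check{\alpha}_j=q_{0j}$, in which case $a_{ij}\leq q_{(k_j-1)j}$ since $k_j\geq 1$, or $a_{ij}$ equals one of the values $q_{1j},\ldots,q_{t_jj}$, in which case $a_{ij}$ is comparable to both $q_{(k_j-1)j}$ and $q_{k_jj}$ and therefore lies on one side of the gap between them. Once this case analysis is confirmed, the computation above closes the argument, and Corollary \ref{Co3.1} delivers $x\in\check{\mathscr{X}}$ because $x_j\leq q_{k_jj}\leq\hat{\alpha}_j$ holds for every $j\in J$ by construction of (\ref{eq6}) and (\ref{eq5}).
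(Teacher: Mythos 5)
Your proposal is correct and follows essentially the same route as the paper: the paper proves Theorem \ref{th3.2} by pointing to the converse direction of the proof of Theorem \ref{th3.1}, which performs exactly your computation (replacing $\delta_{ij}\cdot x_j+(1-\delta_{ij})\cdot a_{ij}$ by $a_{ij}\wedge x_j$ under the range constraints of (\ref{eq6})) to get $x\in\mathscr{X}$ and then applies Corollary \ref{Co3.1}. Your extra check that the two cases defining $\delta_{ij}$ exhaust all pairs $(i,j)$ via the structure of $Q_j$ is a detail the paper leaves implicit, but it does not change the argument.
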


Let the solution set of linear equations (\ref{eq6}) corresponding to $k=(k_1, k_2, \cdots, k_n)\in K$ be represented by $\breve{\mathscr{X}}(k)$. Based
on Theorems \ref{th3.1} and \ref{th3.2}, we obtain an algorithm to find all minimal solutions of system (\ref{eq2}) as follows.
\begin{algorithm}\label{alt1}Input $A=(a_{ij})_{m \times n}$ and $b=(b_1, b_2, \cdots, b_m)$. Output $\check{\mathscr{X}}$.\\
Step 1. Compute $\check{\alpha}= (\check{\alpha}_1, \check{\alpha}_2, \cdots, \check{\alpha}_n)$ defined by (\ref{eq4}). \\
Step 2. If $\check{\alpha}\notin \mathscr{X}$, then compute $\hat{\alpha}_j$ defined by (\ref{eq3}) for any $j\in J$. Otherwise, output
$\check{\mathscr{X}}=\{\check{\alpha}\}$, and go to Step 7.\\
Step 3. Compute $J^*$ defined by (\ref{eqj}).\\
Step 4. Compute $K_j$ defined by (\ref{eq5}) for any $j\in J$,  and $K = \{(k_1, k_2, \cdots, k_n) |k_j\in K_j\}$.\\
Step 5. For any $k=(k_1, k_2, \cdots, k_n)\in K$, construct the corresponding linear equations (\ref{eq6}).\\
Step 6. Solve the corresponding linear equations (\ref{eq6}) and obtain $\check{\mathscr{X}}(k)$.\\
Step 7. Output $\check{\mathscr{X}}=\bigcup \limits_{k\in K}\check{\mathscr{X}}(k)$.\\
Step 8. End.
\end{algorithm}

The following example will be given to illustrate Algorithm \ref{alt1}.
\begin{example}\label{ex1}
\emph{Consider the following fuzzy relation equations:}
$$\left\{ \begin{array}{l}
0.4\wedge x_1 + 0.6\wedge x_2  + 0.5\wedge x_3= 1.4,\\
0.7\wedge x_1 + 0.5\wedge x_2  + 0.8\wedge x_3 = 1.5.\\
\end{array} \right.$$
\end{example}
Step 1. Compute $\check{\alpha}_1 = 0.3$, $\check{\alpha}_2 = 0.5$, $\check{\alpha}_3 = 0.4$.\\
Step 2. It is clear that $\check{\alpha}= (0.3, 0.5, 0.4 )\notin \mathscr{X}$, then compute $\hat{\alpha}_1= 0.7$, $\hat{\alpha}_2= 0.6$ and
$\hat{\alpha}_3= 0.8$.\\
Step 3. For any $j\in J$, $\check{\alpha}_j\neq \hat{\alpha}_j$, thus $J^*=\emptyset$.\\
Step 4. Compute $K_1=\{1, 2\}$, $K_2=\{1\}$, $K_3=\{1, 2\}$ and $$K = \{(1, 1, 1), (1, 1, 2), (2, 1, 1), (2, 1, 2)\}.$$\\
Step 5. Construct and solve the following linear equations (\ref{eq6}):

(1) For $k = (1, 1, 1)$, $$\left\{ \begin{array}{l}
0.3\leq x_1 \leq0.4,\\
0.5\leq x_2 \leq0.6,\\
0.4\leq x_3 \leq0.5,\\
x_1 + x_2  + x_3= 1.4,\\
x_1 + 0.5 + x_3 = 1.5.
\end{array} \right.$$
Hence by $$\left\{ \begin{array}{l}x_1 + x_2  + x_3= 1.4,\\
x_1 + 0.5 + x_3 = 1.5,
\end{array} \right.$$ we have
$$\left\{ \begin{array}{l}
x_1 + x_2  + x_3= 1.4,\\
x_1 + x_3 = 1.
\end{array} \right.$$
Thus $x_2=0.4$, a contradiction since $0.5\leq x_2 \leq0.6$. Therefore, $\check{\mathscr{X}}(1, 1, 1) = \emptyset$.

(2) For $k = (1, 1, 2)$, $$\left\{ \begin{array}{l}
0.3\leq x_1 \leq0.4,\\
0.5\leq x_2 \leq0.6,\\
0.5\leq x_3 \leq0.8,\\
x_1 + x_2  + 0.5= 1.4,\\
x_1 + 0.5 + x_3 = 1.5.
\end{array} \right.$$
Hence by $$\left\{ \begin{array}{l}
x_1 + x_2  + 0.5= 1.4,\\
x_1 + 0.5 + x_3 = 1.5,
\end{array} \right.$$ we have $$\left\{ \begin{array}{l}
x_1 + x_2 = 0.9,\\
x_1 + x_3 = 1.
\end{array} \right.$$
Let $x_1 = t$. Then $x_2 = 0.9-t$ and  $x_3 = 1-t$. Since $0.3\leq x_1 \leq0.4$, $0.5\leq x_2 \leq0.6$ and $0.5\leq x_3 \leq0.8$, $t\in [0.3, 0.4]$.
Therefore, $\check{\mathscr{X}}(1, 1, 2) = \{(t, 0.9-t, 1-t)|t\in [0.3, 0.4]\}$.

(3) For $k = (2, 1, 1)$, $$\left\{ \begin{array}{l}
0.4\leq x_1 \leq0.7,\\
0.5\leq x_2 \leq0.6,\\
0.4\leq x_3 \leq0.5,\\
0.4 + x_2  + x_3= 1.4,\\
x_1 + 0.5 + x_3 = 1.5.
\end{array} \right.$$
Hence by $$\left\{ \begin{array}{l}
0.4 + x_2  + x_3= 1.4,\\
x_1 + 0.5 + x_3 = 1.5,
\end{array}, \right.$$ we have $$\left\{ \begin{array}{l}
x_2 + x_3 = 1,\\
x_1 + x_3 = 1.
\end{array} \right.$$
Let $x_3= t$. Then $x_1= 1-t$ and $x_2= 1-t$. Since $0.4\leq x_1 \leq0.7$, $0.5\leq x_2 \leq0.6$ and $0.4\leq x_3 \leq0.5$, $t\in [0.4, 0.5]$.
Therefore, $\check{\mathscr{X}}(2, 1, 1) = \{(1-t, 1-t, t)|t\in [0.4, 0.5]\}$.

(4) For $k= (2, 1, 2)$, $$\left\{ \begin{array}{l}
0.4\leq x_1 \leq0.7,\\
0.5\leq x_2 \leq0.6,\\
0.5\leq x_3 \leq0.8,\\
0.4 + x_2  + 0.5= 1.4,\\
x_1 + 0.5 + x_3 = 1.5.
\end{array} \right.$$
Hence by $$\left\{ \begin{array}{l}
0.4 + x_2  + 0.5= 1.4,\\
x_1 + 0.5 + x_3 = 1.5,
\end{array} \right.$$ we have $$\left\{ \begin{array}{l}
x_2 = 0.5,\\
x_1 + x_3 = 1.
\end{array} \right.$$
Let $x_1=t$. Then $x_3= 1-t$. Since $0.4\leq x_1 \leq0.7$, $0.5\leq x_2 \leq0.6$ and $0.5\leq x_3 \leq0.8$, $t\in [0.4, 0.5]$. Therefore,
$\check{\mathscr{X}}(2, 1, 2) = \{(t, 0.5, 1-t)|t\in [0.4, 0.5]\}$.\\
Step 6. \begin{eqnarray*}\check{\mathscr{X}}&=& \bigcup \limits_{k\in K}\check{\mathscr{X}}(k)\\&=& \{(t, 0.9-t, 1-t)|t\in [0.3, 0.4]\}\cup\{(1-t, 1-t,
t)|t\in [0.4, 0.5]\}\\&\quad&\cup\{(t, 0.5, 1-t)|t\in [0.4, 0.5]\}.\end{eqnarray*}
\section{Maximal solutions of system (\ref{eq2})}
In this section, we first show a necessary and sufficient condition for a solution of system (\ref{eq2}) being a maximal solution, and then suggest an
algorithm to find all elements of $\hat{\mathscr{X}}$.
\begin{remark}\label{rek4}
If $(1, 1, \cdots, 1)\in \mathscr{X}$. Then $(1, 1, \cdots, 1)$ is the unique maximal solution of system (\ref{eq2}).
\end{remark}

Let $x = (x_1, x_2, \cdots, x_n)$ and denote $J(x)=\{j\in J|x_j=1\}$. Then we have:
\begin{theorem}\label{th4.1}
Let $x = (x_1, x_2, \cdots, x_n)\in \mathscr{X}$. Then $x\in \hat{\mathscr{X}}$ if and only if for any $j \in J\setminus J(x)$, there is an $i\in I$
such that $x_j < a_{ij}$.
\end{theorem}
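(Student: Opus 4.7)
The plan is to prove the two directions separately, with the sufficiency direction handled by contradiction and the necessity direction by contrapositive. Both arguments hinge on how the quantity $a_{ij}\wedge x_j$ reacts to raising a single coordinate $x_j$.

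For the sufficiency direction, assume $x\in\mathscr{X}$ satisfies the stated property and suppose toward contradiction that $x$ is not maximal, so there exists $y\in\mathscr{X}$ with $y\ge x$ and $y\ne x$. Pick $j_0\in J$ with $y_{j_0}>x_{j_0}$; since $y_{j_0}\le 1$, we must have $x_{j_0}<1$, i.e.\ $j_0\in J\setminus J(x)$. By hypothesis, there is some $i\in I$ with $x_{j_0}<a_{ij_0}$. I then compare the $i$-th equations for $x$ and $y$. For every $j\ne j_0$, monotonicity of $\wedge$ gives $a_{ij}\wedge y_j\ge a_{ij}\wedge x_j$, while for $j=j_0$ a short case split (whether $y_{j_0}\le a_{ij_0}$ or $y_{j_0}>a_{ij_0}$) shows $a_{ij_0}\wedge y_{j_0}>x_{j_0}=a_{ij_0}\wedge x_{j_0}$. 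Summing, $\sum_{j}a_{ij}\wedge y_j>\sum_{j}a_{ij}\wedge x_j=b_i$, contradicting $y\in\mathscr{X}$.

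For the necessity direction, I would argue by contraposition. Suppose there exists $j_0\in J\setminus J(x)$ such that $a_{ij_0}\le x_{j_0}$ for every $i\in I$. Define $y$ by $y_{j_0}=1$ and $y_j=x_j$ for $j\ne j_0$; note $y>x$ because $x_{j_0}<1$. For every $i\in I$ and every $j\ne j_0$ we have $a_{ij}\wedge y_j=a_{ij}\wedge x_j$, and for $j=j_0$ the assumption $a_{ij_0}\le x_{j_0}\le y_{j_0}$ yields $a_{ij_0}\wedge y_{j_0}=a_{ij_0}=a_{ij_0}\wedge x_{j_0}$. Therefore $\sum_j a_{ij}\wedge y_j=\sum_j a_{ij}\wedge x_j=b_i$ for all $i$, so $y\in\mathscr{X}$ with $y>x$, which contradicts $x\in\hat{\mathscr{X}}$.

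The only delicate point is the strict inequality $a_{ij_0}\wedge y_{j_0}>a_{ij_0}\wedge x_{j_0}$ in the sufficiency part: it is tempting to write it in one line, but it really does need the two-case verification using $x_{j_0}<a_{ij_0}$ and $y_{j_0}>x_{j_0}$. Everything else is a direct summation argument and the usual monotonicity of $a\wedge(\cdot)$, so no additional machinery beyond the definition of $\mathscr{X}$ and $\hat{\mathscr{X}}$ is required.
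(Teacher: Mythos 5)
Your proposal is correct and follows essentially the same route as the paper: sufficiency by assuming a strictly larger solution $y$ exists and deriving $\sum_j a_{ij}\wedge y_j>b_i$ for a suitable $i$ (the paper compresses your two-case check into the single observation $x_h<a_{ih}\wedge y_h$), and necessity by bumping the offending coordinate up to $1$ to produce a strictly larger solution. No gaps.
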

\begin{proof}
Let $x = (x_1, x_2, \cdots, x_n )\in \hat{\mathscr{X}}$. Assume that for any $i\in I$, there is a $j \in J\setminus J(x)$ such that $a_{ij}\leq x_j<1$.
Define $x^1 = (x^1_1, x^1_2, \cdots, x^1_s)$ with $$x_k^1 = \left\{ \begin{array}{l}
1, k = j,\\
x_k, k\ne j.
\end{array} \right.$$
Obviously, $x<x^1$. For any $i\in I$, we have
\begin{eqnarray*}
\sum\limits_{k \in J} a_{ik}\wedge x^1_k &=& a_{ij}\wedge x^1_j + \sum\limits_{k \in {J\setminus\{j\}}} a_{ik}\wedge x_k \\ &=&a_{ij} +
\sum\limits_{k\in {J\setminus\{j\}}} a_{ik}\wedge x_k \\ &=& a_{ij}\wedge x_j + \sum\limits_{k\in {J\setminus\{j\}}} a_{ik}\wedge x_k\\ &=& b_i.
\end{eqnarray*}
Hence, $x^1\in \mathscr{X}$, contrary to $x$ is a maximal solution of system (\ref{eq2}).

Conversely, suppose that there exists a $y\in \mathscr{X}$ such $x<y$. Then there is a $h\in J\setminus J(x)$ such that $x_h <y_h$ since $x_j=1$ for
$j\in J(x)$. Thus $x_h <a_{ih}\wedge y_h$ since $x_h< a_{ih}$. So that
\begin{eqnarray*}
\sum\limits_{k \in J} a_{ik}\wedge y_k &=& a_{ih}\wedge y_h + \sum\limits_{k \in {J\setminus\{h\}}} a_{ik}\wedge y_k \\&>& x_h+ \sum\limits_{k \in
{J\setminus\{h\}}} a_{ik}\wedge y_k\\&\geq&x_h+ \sum\limits_{k \in {J\setminus\{h\}}} a_{ik}\wedge x_k \\&=& a_{ih}\wedge x_h +\sum\limits_{k \in
{J\setminus\{h\}}} a_{ik}\wedge x_k\\&=& b_i.
\end{eqnarray*}
Contrary to $y\in \mathscr{X}$. Therefore, $x$ is a maximal solution of system (\ref{eq2}).
\end{proof}

Let $M=\{(m_1, m_2, \cdots, m_n)|m_j\in M_j \mbox{ for any }j\in J\}$ with
\begin{equation}\label{eq7}
   M_j= (K_j\setminus \{0\})\bigcup \{\infty\}.
\end{equation}


Let $m=(m_1, m_2, \cdots, m_n)\in M$. Denote $J(m)=\{j\in J | m_j=\infty\}$. Then we have the following theorem.
\begin{theorem}\label{th4.2}
Let $x = (x_1, x_2, \cdots, x_n)$. Then $x\in \hat{\mathscr{X}}$ if and only if there is an $m=(m_1, m_2, \cdots, m_n)\in M$ such that $x$ satisfies the
linear equations as below:
\begin{equation}\label{eq9}\left\{\begin{array}{ll}
x_j=1, j\in J(m),\\\\
q_{(m_j-1)j} \leq x_j < q_{m_jj}, j\in J\setminus J(m),\\ \\
\sum \limits_{j\in J\setminus J(m)}\delta_{ij}\cdot x_j + \sum \limits_{j\in J\setminus J(m)}(1 - \delta_{ij})\cdot a_{ij} +\sum \limits_{j\in
J(m)}a_{ij}= b_i,~i\in I,
\end{array}\right.
\end{equation}
where the operation $``\cdot"$ represents the ordinary multiplication and $$\delta_{ij} = \left\{ \begin{array}{l}
1, q_{m_jj} \leq a_{ij},\\\\
0, a_{ij} \leq q_{(m_j-1)j}
\end{array} \right.$$
for all $i\in I$, $j\in J\setminus J(m)$.
\end{theorem}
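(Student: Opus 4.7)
The plan is to mirror the proof of Theorem \ref{th3.1}, but with Theorem \ref{th4.1} playing the role of Lemma \ref{le3.1} and with the strict inequality $x_j < q_{m_jj}$ replacing the weak upper bound that appeared in the minimal case. The parameter $m_j = \infty$ encodes the ``$x_j = 1$'' alternative, just as $k_j$ chose an interval position for minimal solutions.

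For the necessity direction, I would fix $x = (x_1,\ldots,x_n) \in \hat{\mathscr{X}}$ and define, for each $j \in J$,
\[
 m_j = \begin{cases} \infty, & j \in J(x), \\ \min\{k \in \{1,\ldots,t_j\} \mid x_j < q_{kj}\}, & j \in J \setminus J(x). \end{cases}
\]
For $j \in J \setminus J(x)$, Theorem \ref{th4.1} supplies an $i \in I$ with $x_j < a_{ij} \le \hat{\alpha}_j$, so some $q_{kj} > x_j$ exists and $m_j$ is well defined; together with $x_j \ge \check{\alpha}_j = q_{0j}$ from Proposition \ref{pr3.1}, this gives $q_{(m_j-1)j} \le x_j < q_{m_jj}$, and in particular $m_j \in K_j \setminus\{0\} \subseteq M_j$. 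Note that $j \in J^*$ forces $m_j = \infty$ (since $K_j \setminus\{0\} = \emptyset$), which is consistent because $x_j \ge \check{\alpha}_j = \hat{\alpha}_j$ combined with $x_j \le 1$ only lets $x_j < 1$ happen if we never needed the $J(m)$ branch anyway. Then I would split $\sum_{j\in J} a_{ij}\wedge x_j = b_i$ into $j \in J(m)$ (where $a_{ij}\wedge x_j = a_{ij}$) and $j \in J \setminus J(m)$, and use that $Q_j$ enumerates all distinct values of $a_{ij}$ exceeding $\check{\alpha}_j$ to conclude that every $a_{ij}$ satisfies either $a_{ij} \ge q_{m_jj}$ or $a_{ij} \le q_{(m_j-1)j}$, yielding $a_{ij}\wedge x_j = \delta_{ij}\cdot x_j + (1-\delta_{ij})\cdot a_{ij}$. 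Summing gives the third equation of (\ref{eq9}).

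For the sufficiency direction, suppose $x$ and some $m \in M$ satisfy (\ref{eq9}). Running the algebra in reverse, the equation $\sum_{j\in J\setminus J(m)}[\delta_{ij}\cdot x_j + (1-\delta_{ij})\cdot a_{ij}] + \sum_{j \in J(m)} a_{ij} = b_i$ collapses to $\sum_{j \in J} a_{ij}\wedge x_j = b_i$, using $a_{ij}\wedge 1 = a_{ij}$ on $J(m)$ and the same case analysis for $\delta_{ij}$ on $J \setminus J(m)$, so $x \in \mathscr{X}$. Moreover, $x_j = 1$ exactly on $J(m)$ (because $x_j < q_{m_jj} \le \hat{\alpha}_j \le 1$ off $J(m)$), so $J(x) = J(m)$. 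For any $j \in J \setminus J(x)$, since $q_{m_jj} \in Q_j$, there exists $i \in I$ with $a_{ij} = q_{m_jj} > x_j$, which is the hypothesis of Theorem \ref{th4.1}. Hence $x \in \hat{\mathscr{X}}$.

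I expect the main obstacle to be the bookkeeping around the strict inequality $x_j < q_{m_jj}$ and verifying that the case split for $\delta_{ij}$ truly exhausts all possibilities; this is where one must invoke the explicit construction of $Q_j$ as the set of distinct $a_{ij}$-values above $\check{\alpha}_j$ to rule out any $a_{ij}$ falling strictly between two consecutive $q_{(m_j-1)j}$ and $q_{m_jj}$. Everything else is parallel to Theorem \ref{th3.1} with the single conceptual twist that the ``maximal'' witness must be strict, which is precisely what forces $M_j$ to omit $0$ and to include the symbol $\infty$ for the $x_j = 1$ branch.
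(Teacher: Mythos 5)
Your proposal is correct and follows essentially the same route as the paper: in the necessity direction you build $m$ from a maximal $x$ by setting $m_j=\infty$ on $J(x)$ and choosing $m_j$ as the least index with $x_j<q_{m_jj}$ (well-defined via Theorem \ref{th4.1} and Proposition \ref{pr3.1}), then split $\sum_{j}a_{ij}\wedge x_j$ exactly as in the paper's computation, and the sufficiency direction reverses the algebra and re-invokes Theorem \ref{th4.1} after noting $J(x)=J(m)$. The one extra item you include, the consistency check that $j\in J^*$ forces $x_j=1$, is handled somewhat vaguely in your write-up but is already implied by your well-definedness argument, and the paper glosses over it entirely.
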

\begin{proof}
 Suppose that $x\in \hat{\mathscr{X}}$. Then by Theorem \ref{th4.1}, there is an $i\in I$ such that $\check{\alpha}_j\leq x_j<  q_{m_jj}=a_{ij}$ for any
 $j\in J\setminus J(x)$, and by Formula (\ref{eq5}) it is easy to see that $m_j\in K_j\setminus \{0\}$. For any $j\in J$, let $m_j=\infty$ if $j\in
 J(x)$. Thus $J(m)=J(x)$. In this way, we can construct an $m=(m_1, m_2, \cdots, m_n)$ such that $x_j<  q_{m_jj}$ for any $j\in J\setminus J(m)$ and
 $x_j=1$ for any $j\in J(m)$. By Formula (\ref{eq7}), $m=(m_1, m_2, \cdots, m_n)\in M$. Again by Formula (\ref{eq5}), we have $q_{(m_j-1)j} \leq x_j <
 q_{m_jj}$ for any $j\in J\setminus J(m)$. Then for any $i\in I$, \begin{eqnarray*}
 b_i &=& \sum \limits_{j\in J} a_{ij}\wedge x_j \\&=&  \sum \limits_{j\in J\setminus J(m)} a_{ij}\wedge x_j + \sum \limits_{j\in J(m)} a_{ij}\wedge x_j
 \\&=& \sum \limits_{j\in J\setminus J(m), q_{m_jj} \leq a_{ij}}a_{ij}\wedge x_j+ \sum
\limits_{j\in J\setminus J(m), a_{ij}\leq q_{({m_j-1})j}}a_{ij}\wedge x_j + \sum \limits_{j\in J(m)} a_{ij}\wedge x_j  \\&=& \sum \limits_{j\in
J\setminus J(m), q_{m_jj} \leq a_{ij}}x_j+ \sum \limits_{j\in J\setminus J(m), a_{ij}\leq q_{({m_j-1})j}}a_{ij} + \sum \limits_{j\in J(m)} a_{ij} \\
&=&\sum \limits_{j\in J\setminus J(m)}\delta_{ij}\cdot x_j + \sum \limits_{j\in J\setminus J(m)}(1 - \delta_{ij})\cdot a_{ij}+ \sum \limits_{j\in J(m)}
a_{ij},
 \end{eqnarray*}
where the operation $``\cdot"$ represents the ordinary multiplication and $$\delta_{ij} = \left\{ \begin{array}{l}
1, q_{m_jj}\leq a_{ij},\\
0, a_{ij}\leq q_{({m_j-1})j}
\end{array} \right.$$
 for all $i\in I$, $j\in J\setminus J(m)$.

  Conversely, suppose that there exists an $(m_1, m_2, \cdots, m_n)\in M$ such that $x$ satisfies the linear equations (\ref{eq9}). Then for any $i\in
  I$, we have \begin{eqnarray*}
  b_i&=& \sum \limits_{j\in J\setminus J(m)}\delta_{ij}\cdot x_j + \sum \limits_{j\in J\setminus J(m)}(1 - \delta_{ij})\cdot
  a_{ij} +\sum \limits_{j\in J(m)}a_{ij}
  \\&=&\sum \limits_{j\in J\setminus J(m), q_{m_jj} \leq a_{ij}}x_j+ \sum \limits_{j\in J\setminus J(m), a_{ij}\leq q_{(m_j-1)j}}a_{ij}+\sum
  \limits_{j\in J(m)}a_{ij}
  \\&=& \sum \limits_{j\in J\setminus J(m), q_{m_jj} \leq a_{ij}}a_{ij}\wedge x_j+ \sum \limits_{j\in J\setminus J(m), a_{ij}\leq
  q_{(m_j-1)j}}a_{ij}\wedge x_j+\sum \limits_{j\in J(m)}a_{ij}\wedge x_j
  \\&=&\sum \limits_{j\in J\setminus J(m)} a_{ij}\wedge x_j + \sum \limits_{j\in J(m)} a_{ij}\wedge x_j
  \\&=& \sum \limits_{j\in J} a_{ij}\wedge x_j.
 \end{eqnarray*}
 Hence $x\in \mathscr{X}$. For this $x$, it is easy to see  $J(m)=J(x)$. So that there is an $i$ such that $ x_j<q_{m_jj}= a_{ij}$ for any $j\in
 J\setminus J(x)$. Consequently, by Theorem \ref{th4.1}, we have $x\in \hat{\mathscr{X}}$.
\end{proof}

Furthermore, the proof of Theorem \ref{th4.2} implies the following theorem.
\begin{theorem}\label{th4.3}
For any $m=(m_1, m_2, \cdots, m_n)\in M$, if $x= (x_1, x_2, \cdots, x_n)$ satisfies the equations (\ref{eq9}), then $x\in \hat{\mathscr{X}}$.
\end{theorem}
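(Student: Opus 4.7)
The plan is to re-use the ``conversely'' portion of the proof of Theorem \ref{th4.2}: that argument already contains exactly what is needed, so I would package Theorem \ref{th4.3} as an immediate consequence. Fix $m=(m_1,\ldots,m_n)\in M$ and $x=(x_1,\ldots,x_n)$ satisfying (\ref{eq9}). First I would verify $x\in\mathscr{X}$. For each $i\in I$, I split $\sum_{j\in J}a_{ij}\wedge x_j$ over the partition $J=J(m)\cup(J\setminus J(m))$, and further split $J\setminus J(m)$ according to whether $\delta_{ij}=1$ (so $q_{m_jj}\le a_{ij}$, which together with $x_j<q_{m_jj}$ forces $a_{ij}\wedge x_j=x_j$) or $\delta_{ij}=0$ (so $a_{ij}\le q_{(m_j-1)j}\le x_j$, forcing $a_{ij}\wedge x_j=a_{ij}$); for $j\in J(m)$, the equality $x_j=1$ yields $a_{ij}\wedge x_j=a_{ij}$. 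Substituting these identifications collapses the sum to the left-hand side of the third line of (\ref{eq9}), which equals $b_i$.

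Next I would apply Theorem \ref{th4.1} to upgrade membership in $\mathscr{X}$ to membership in $\hat{\mathscr{X}}$. For that I need $J(x)=J(m)$, and for each $j\in J\setminus J(x)$ an index $i\in I$ with $x_j<a_{ij}$. The inclusion $J(m)\subseteq J(x)$ is immediate from the first line of (\ref{eq9}); the reverse inclusion uses that for $m_j\ne\infty$ one has $x_j<q_{m_jj}\le\hat{\alpha}_j\le 1$, so $x_j\ne 1$ and hence $j\notin J(x)$. For the witness, Definition \ref{de3.1} guarantees $q_{m_jj}\in\{a_{ij}\mid i\in I\}$, so I can pick $i\in I$ with $q_{m_jj}=a_{ij}$, and then $x_j<q_{m_jj}=a_{ij}$ as required. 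Invoking Theorem \ref{th4.1} concludes the argument.

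The only step that requires any care is confirming $J(x)=J(m)$: without the bound $q_{m_jj}\le\hat{\alpha}_j\le 1$ one could in principle have $x_j=1$ even for $m_j\ne\infty$, but this bound is built into Definition \ref{de3.1} via $q_{t_jj}=\hat{\alpha}_j$. Everything else is bookkeeping already carried out in the proof of Theorem \ref{th4.2}, so Theorem \ref{th4.3} really just reads off the sufficiency direction of that argument.
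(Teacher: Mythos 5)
Your proposal is correct and follows essentially the same route as the paper: the authors state that Theorem \ref{th4.3} is implied by the proof of Theorem \ref{th4.2}, and your argument is precisely the ``conversely'' half of that proof (verify $x\in\mathscr{X}$ via the $\delta_{ij}$ case split, establish $J(x)=J(m)$, and invoke Theorem \ref{th4.1} with the witness $i$ satisfying $q_{m_jj}=a_{ij}$). Your explicit justification of $J(x)=J(m)$ via $x_j<q_{m_jj}\le\hat{\alpha}_j\le 1$ is a useful detail the paper leaves as ``it is easy to see.''
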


Let the solution set of linear equations (\ref{eq9}) corresponding to $m= (m_1, m_2, \cdots, m_n)\in M$ be represented by $\hat{\mathscr{X}}(m)$. Then
according to Theorems \ref{th4.2} and \ref{th4.3}, we obtain an algorithm to find all maximal solutions of system (\ref{eq2}) as follows.
\begin{algorithm}\label{al4.1}Input $A=(a_{ij})_{m \times n}$ and $b=(b_1, b_2, \cdots, b_m)$. Output $\hat{\mathscr{X}}$.\\
Step 1. If $(1, 1, \cdots, 1)\notin\mathscr{X}$, then compute $\hat{\alpha}_j$ and $\check{\alpha}_j$ defined by (\ref{eq3}) and (\ref{eq4}) for any
$j\in J$, respectively. Otherwise, output $\check{\mathscr{X}}=\{(1, 1, \cdots, 1)\}$, and go to Step 6.\\
Step 2. Compute $M_j$ defined by (\ref{eq7}) for any $j\in J$, and $M=\{(m_1, m_2, \cdots, m_n)|m_j\in M_j\}$.\\
Step 3. For any $m= (m_1, m_2, \cdots, m_n)\in M$, construct the corresponding linear equations (\ref{eq9}).\\
Step 4. Solve the corresponding linear equations (\ref{eq9}) and obtain $\hat{\mathscr{X}}(m)$.\\
Step 5. Output $\hat{\mathscr{X}}=\bigcup \limits_{m\in M}\hat{\mathscr{X}}(m)$.\\
Step 6. End.
\end{algorithm}

 Next, an example will be given to verify the effectiveness of Algorithm \ref{al4.1}.
\begin{example}
\emph{Consider the fuzzy relation equations in Example \ref{ex1} again.}
\end{example}
Step 1. It is obvious that $(1, 1, 1)$ is not a solution of the fuzzy relation equations, then compute $\hat{\alpha}_1= 0.7$, $\hat{\alpha}_2= 0.6$ and
$\hat{\alpha}_3= 0.8$; $\check{\alpha}_1 = 0.3$, $\check{\alpha}_2 = 0.5$ and $\check{\alpha}_3 = 0.4$.\\
Step 2. Compute $M_1=\{1, 2, \infty\}$, $M_2=\{1, \infty\}$, $M_3=\{1, 2, \infty\}$, and
\begin{eqnarray*}
M&=&\{(1, 1, 1), (1, 1, 2), (1, 1,\infty), (1, \infty, 1), (1, \infty, 2), (1, \infty, \infty), (2, 1, 1), (2, 1, 2), (2, 1, \infty),\\&&(2, \infty, 1),
(2, \infty, 2), (2, \infty, \infty), (\infty ,1, 1), (\infty, 1, 2), (\infty, 1, \infty), (\infty, \infty, 1), (\infty, \infty, 2), \\&&(\infty, \infty,
\infty)\}.
\end{eqnarray*}
Step 3. Construct and solve the following linear equations (\ref{eq9}):

(1) For $m = (1, 1, 1)$, $$\left\{ \begin{array}{l}
0.3\leq x_1< 0.4,\\
0.5\leq x_2 <0.6,\\
0.4\leq x_3 <0.5,\\
x_1 + x_2  + x_3= 1.4,\\
x_1 + 0.5 + x_3 = 1.5.
\end{array} \right.$$
Hence by $$\left\{ \begin{array}{l}
x_1 + x_2  + x_3= 1.4,\\
x_1 + 0.5 + x_3 = 1.5,
\end{array} \right.$$
we have $$\left\{ \begin{array}{l}
x_1 + x_2  + x_3= 1.4,\\
x_1 + x_3 = 1.
\end{array} \right.$$
$x_2=0.4$, contrary to $0.5\leq x_2 < 0.6$. Therefore, $\check{\mathscr{X}}(1, 1, 1) = \emptyset$.

(2) For $m = (1, 1, 2)$, $$\left\{ \begin{array}{l}
0.3\leq x_1 <0.4,\\
0.5\leq x_2 <0.6,\\
0.5\leq x_3 <0.8,\\
x_1 + x_2  + 0.5= 1.4,\\
x_1 + 0.5 + x_3 = 1.5.
\end{array} \right.$$
Hence by $$\left\{ \begin{array}{l}
x_1 + x_2  + 0.5= 1.4,\\
x_1 + 0.5 + x_3 = 1.5,
\end{array} \right.$$
 we have $$\left\{ \begin{array}{l}
x_1 + x_2 = 0.9,\\
x_1 + x_3 = 1.
\end{array} \right.$$
Let $x_1 = t$. Then $x_2 = 0.9-t$ and  $x_3 = 1-t$. Since $0.3\leq x_1 <0.4$, $0.5\leq x_2 <0.6$ and $0.5\leq x_3 <0.8$, $t\in (0.3, 0.4)$. Therefore,
$\hat{\mathscr{X}}(1, 1, 2) = \{(t, 0.9-t, 1-t)|t\in (0.3, 0.4)\}$.

(3) For $m = (1, 1, \infty)$, $$\left\{ \begin{array}{l}
0.3\leq x_1< 0.4,\\
0.5\leq x_2 <0.6,\\
x_3 =1,\\
x_1 + x_2  + 0.5= 1.4,\\
x_1 + 0.5 + 0.8 = 1.5.
\end{array} \right.$$
Hence by $$\left\{ \begin{array}{l}
x_1 + x_2  + 0.5= 1.4,\\
x_1 + 0.5 + 0.8 = 1.5,
\end{array} \right.$$
we have $$\left\{ \begin{array}{l}
x_2 = 0.7,\\
x_1 = 0.2,
\end{array} \right.$$
contrary to $0.3\leq x_1< 0.4$ and $0.5\leq x_2 <0.6$. Therefore, $\check{\mathscr{X}}(1, 1, \infty) = \emptyset$.

(4) For $m = (1, \infty, 1)$, $$\left\{ \begin{array}{l}
0.3\leq x_1< 0.4,\\
x_2 = 1,\\
0.4\leq x_3 <0.5,\\
x_1 + 0.6  + x_3= 1.4,\\
x_1 + 0.5 + x_3 = 1.5.
\end{array} \right.$$
Hence by $$\left\{ \begin{array}{l}
x_1 + 0.6  + x_3= 1.4,\\
x_1 + 0.5 + x_3 = 1.5,
\end{array} \right.$$
we have $$\left\{ \begin{array}{l}
x_1 + x_3= 0.8,\\
x_1  + x_3 = 1,
\end{array} \right.$$
a contradiction. Therefore, $\check{\mathscr{X}}(1, \infty, 1) = \emptyset$.

(5) For $m = (1, \infty, 2)$, $$\left\{ \begin{array}{l}
0.3\leq x_1< 0.4,\\
x_2 = 1,\\
0.5\leq x_3 <0.8,\\
x_1 + 0.6  + 0.5= 1.4,\\
x_1 + 0.5 + x_3 = 1.5.
\end{array} \right.$$
Hence by $$\left\{ \begin{array}{l}
x_1 + 0.6  + 0.5= 1.4,\\
x_1 + 0.5 + x_3 = 1.5,
\end{array} \right.$$
we have $$\left\{ \begin{array}{l}
x_1 = 0.3,\\
x_3 = 0.7.
\end{array} \right.$$
Therefore, $\hat{\mathscr{X}}(1, \infty, 2)=\{(0.3, 1, 0.7)\}$.

(6) For $m = (1, \infty, \infty)$, $$\left\{ \begin{array}{l}
0.3\leq x_1< 0.4,\\
x_2 = 1,\\
 x_3 =1,\\
x_1 + 0.6  + 0.5= 1.4,\\
x_1 + 0.5 + 0.8 = 1.5.
\end{array} \right.$$
Hence by $$\left\{ \begin{array}{l}
x_1 + 0.6  + 0.5= 1.4,\\
x_1 + 0.5 + 0.8 = 1.5,
\end{array} \right.$$
we have $$\left\{ \begin{array}{l}
x_1 = 0.3,\\
x_1  = 0.2,
\end{array} \right.$$
a contradiction. Therefore, $\check{\mathscr{X}}(1, \infty, \infty) = \emptyset$.

(7) For $m = (2, 1, 1)$, $$\left\{ \begin{array}{l}
0.4\leq x_1 <0.7,\\
0.5\leq x_2 <0.6,\\
0.4\leq x_3 <0.5,\\
0.4 + x_2  + x_3= 1.4,\\
x_1 + 0.5 + x_3 = 1.5.
\end{array} \right.$$
Hence by $$\left\{ \begin{array}{l}
0.4 + x_2  + x_3= 1.4,\\
x_1 + 0.5 + x_3 = 1.5,
\end{array} \right.$$
 we have $$\left\{ \begin{array}{l}
x_2 + x_3 = 1,\\
x_1 + x_3 = 1.
\end{array} \right.$$
Let $x_3= t$. Then $x_1= 1-t$ and $x_2= 1-t$. Since $0.4\leq x_1 <0.7$, $0.5\leq x_2 <0.6$ and $0.4\leq x_3 <0.5$, $t\in (0.4, 0.5)$. Therefore,
$\hat{\mathscr{X}}(2, 1, 1) = \{(1-t, 1-t, t)|t\in (0.4, 0.5)\}$.

(8) For $m= (2, 1, 2)$, $$\left\{ \begin{array}{l}
0.4\leq x_1 <0.7,\\
0.5\leq x_2 <0.6,\\
0.5\leq x_3 <0.8,\\
0.4 + x_2  + 0.5= 1.4,\\
x_1 + 0.5 + x_3 = 1.5.
\end{array} \right.$$
Hence by $$\left\{ \begin{array}{l}
0.4 + x_2  + 0.5= 1.4,\\
x_1 + 0.5 + x_3 = 1.5,
\end{array} \right.$$
 we have $$\left\{ \begin{array}{l}
x_2 = 0.5,\\
x_1 + x_3 = 1.
\end{array} \right.$$
Let $x_1=t$. Then $x_3= 1-t$. Since $0.4\leq x_1 <0.7$, $0.5\leq x_2 <0.6$ and $0.5\leq x_3 <0.8$, $t\in [0.4, 0.5]$. Therefore, $\check{\mathscr{X}}(2,
1, 2) = \{(t, 0.5, 1-t)|t\in [0.4, 0.5]\}$.

(9) For $m = (2, 1, \infty)$, $$\left\{ \begin{array}{l}
0.4\leq x_1 <0.7,\\
0.5\leq x_2 <0.6,\\
 x_3=1,\\
0.4 + x_2  + 0.5= 1.4,\\
x_1 + 0.5 + 0.8 = 1.5.
\end{array} \right.$$
Hence by $$\left\{ \begin{array}{l}
0.4 + x_2  + 0.5= 1.4,\\
x_1 + 0.5 + 0.8 = 1.5,
\end{array} \right.$$
we have $$\left\{ \begin{array}{l}
x_2= 0.5,\\
x_1 = 0.2,
\end{array} \right.$$
contrary to $0.4\leq x_1 <0.7$. Therefore, $\check{\mathscr{X}}(2, 1, \infty) = \emptyset$.

(10) For $m = (2, \infty, 1)$, $$\left\{ \begin{array}{l}
0.4\leq x_1 <0.7,\\
x_2 =1,\\
0.4\leq x_3 <0.5,\\
0.4 + 0.6  + x_3= 1.4,\\
x_1 + 0.5 + x_3 = 1.5.
\end{array} \right.$$
Hence by $$\left\{ \begin{array}{l}
0.4 + 0.6  + x_3= 1.4,\\
x_1 + 0.5 + x_3 = 1.5,
\end{array} \right.$$
 we have $$\left\{ \begin{array}{l}
x_1 = 0.6,\\
x_3 = 0.4.
\end{array} \right.$$
Therefore, $\hat{\mathscr{X}}(2, \infty, 1)=\{(0.6, 1, 0.4)\}$.

(11) For $m = (2, \infty, 2)$, $$\left\{ \begin{array}{l}
0.4\leq x_1 <0.7,\\
x_2 =1,\\
0.5\leq x_3 <0.8,\\
0.4 + 0.6  + 0.5= 1.4,\\
x_1 + 0.5 + x_3 = 1.5.
\end{array} \right.$$

0.4 + 0.6  + 0.5= 1.4 is impossible. Therefore, $\check{\mathscr{X}}(2, \infty, 2) = \emptyset$.

(12) For $m = (2, \infty, \infty)$, $$\left\{ \begin{array}{l}
0.4\leq x_1 <0.7,\\
x_2 =1,\\
x_3 =1,\\
0.4 + 0.6  + 0.5= 1.4,\\
x_1 + 0.5 + 0.8 = 1.5.
\end{array} \right.$$

0.4 + 0.6  + 0.5= 1.4 is impossible. Therefore, $\check{\mathscr{X}}(2, \infty, \infty) = \emptyset$.

(13) For $m = (\infty, 1, 1)$, $$\left\{ \begin{array}{l}
x_1=1,\\
0.5\leq x_2 <0.6,\\
0.4\leq x_3 <0.5,\\
0.4 + x_2  + x_3= 1.4,\\
0.7 + 0.5 + x_3 = 1.5.
\end{array} \right.$$
Hence by $$\left\{ \begin{array}{l}
0.4 + x_2  + x_3= 1.4,\\
0.7 + 0.5 + x_3 = 1.5,
\end{array} \right.$$
we have $$\left\{ \begin{array}{l}
x_2 = 0.7,\\
x_3 = 0.3,
\end{array} \right.$$
contrary to $0.5\leq x_2 <0.6$ and $0.4\leq x_3 <0.5$. Therefore, $\check{\mathscr{X}}(\infty, 1, 1) = \emptyset$.

(14) For $m = (\infty, 1, 2)$, $$\left\{ \begin{array}{l}
x_1=1,\\
0.5\leq x_2 <0.6,\\
0.5\leq x_3 <0.8,\\
0.4 + x_2  + 0.5= 1.4,\\
0.7 + 0.5 + x_3 = 1.5.
\end{array} \right.$$
Hence by $$\left\{ \begin{array}{l}
0.4 + x_2  + 0.5= 1.4,\\
0.7 + 0.5 + x_3 = 1.5,
\end{array} \right.$$
we have $$\left\{ \begin{array}{l}
x_2 = 0.5,\\
x_3 = 0.3,
\end{array} \right.$$
contrary to $0.5\leq x_3 <0.8$, therefore, $\check{\mathscr{X}}(\infty, 1, 2) = \emptyset$.

(15) For $m = (\infty, 1, \infty)$, $$\left\{ \begin{array}{l}
x_1=1,\\
0.5\leq x_2 <0.6,\\
x_3 =1,\\
0.4 + x_2  + 0.5= 1.4,\\
0.7 + 0.5 + 0.8 = 1.5.
\end{array} \right.$$

0.7 + 0.5 + 0.8 = 1.5 is impossible. Therefore, $\check{\mathscr{X}}(\infty, 1, \infty) = \emptyset$.

(16) For $m = (\infty, \infty, 1)$, $$ \left\{ \begin{array}{l}
x_1=1,\\
x_2 =1,\\
0.4\leq x_3 <0.5,\\
0.4 + 0.6  + x_3= 1.4,\\
0.7 + 0.5 + x_3 = 1.5.
\end{array} \right.$$
Hence by $$ \left\{ \begin{array}{l}
0.4 + 0.6  + x_3= 1.4,\\
0.7 + 0.5 + x_3 = 1.5,
\end{array} \right.$$
we have $$ \left\{ \begin{array}{l}
 x_3= 0.4,\\
x_3 = 0.3,
\end{array} \right.$$
a contradiction. Therefore, $\check{\mathscr{X}}(\infty, \infty, 1) = \emptyset$.

(17) For $m = (\infty, \infty, 2)$, $$\left\{ \begin{array}{l}
x_1=1,\\
x_2 =1,\\
0.5\leq x_3 <0.8,\\
0.4 + 0.6  + 0.5= 1.4,\\
0.7 + 0.5 + x_3 = 1.5.
\end{array} \right.$$

0.4 + 0.6  + 0.5= 1.4 is impossible. Therefore, $\check{\mathscr{X}}(\infty, \infty, 2) = \emptyset$.

(18) For $m = (\infty, \infty, \infty)$, $$\left\{ \begin{array}{l}
x_1=1,\\
x_2 =1,\\
x_3 =1,\\
0.4 + 0.6  + 0.5= 1.4,\\
0.7 + 0.5 + 0.8 = 1.5.
\end{array} \right.$$

Both 0.4 + 0.6  + 0.5= 1.4 and 0.7 + 0.5 + 0.8 = 1.5 are impossible. Therefore, $\check{\mathscr{X}}(\infty, \infty, \infty) = \emptyset$.\\
Step 4.
\begin{eqnarray*}
\hat{\mathscr{X}} &= &\bigcup \limits_{m\in M}\hat{\mathscr{X}}(m)\\&=&\{(t, 0.9-t, 1-t)|t\in (0.3, 0.4)\}
\cup\{(0.3, 1, 0.7)\}\cup \{(1-t, 1-t, t)|t\in (0.4, 0.5)\}\\&\quad&\cup\{(t, 0.5, 1-t)|t\in [0.4, 0.5]\}\cup \{(0.6, 1, 0.4)\}.
\end{eqnarray*}

\section{The solution set of system (\ref{eq2})}
In this section, we first prove that every solution of system (\ref{eq2}) is between a minimal solution and a maximal one if $\mathscr{X}\neq\emptyset$,
then we completely describe the solution set $\mathscr{X}$.
\begin{theorem}\label{th4.4}
For every $x\in \mathscr{X}$, there is an $\check{x}\in \check{\mathscr{X}}$ and an $\hat{x}\in \hat{\mathscr{X}}$ such that $\check{x}\leq x\leq
\hat{x}$.
\end{theorem}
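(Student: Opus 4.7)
The plan is to construct the bounding solutions explicitly from $x$, truncating it from above by $\hat\alpha$ to get a minimal solution below, and lifting it to $1$ on coordinates where this costs nothing to get a maximal solution above.

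\medskip

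First I would build $\check x \in \check{\mathscr{X}}$ with $\check x \le x$. Set $\check x_j = x_j \wedge \hat\alpha_j$ for every $j \in J$. Then $\check x \le x$ by construction, and also $\check x \le \hat\alpha$. To show $\check x \in \mathscr{X}$, I would argue that for each $j$ with $x_j > \hat\alpha_j$ (the only coordinates where $\check x$ differs from $x$), every $a_{ij}$ satisfies $a_{ij} \le \hat\alpha_j < x_j$, so $a_{ij} \wedge x_j = a_{ij} = a_{ij} \wedge \check x_j$; hence $\sum_{j \in J} a_{ij} \wedge \check x_j = \sum_{j \in J} a_{ij} \wedge x_j = b_i$ for every $i \in I$. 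Since $\check x \in \mathscr{X}$ and $\check x \le \hat\alpha$, Remark \ref{re3.1} (equivalently Lemma \ref{le3.1}) yields $\check x \in \check{\mathscr{X}}$.

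\medskip

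Next I would build $\hat x \in \hat{\mathscr{X}}$ with $\hat x \ge x$ by the dual recipe: set $\hat x_j = 1$ when $x_j \ge \hat\alpha_j$ and $\hat x_j = x_j$ otherwise. Clearly $\hat x \ge x$. For membership in $\mathscr{X}$, note that on each coordinate $j$ where $\hat x_j$ was raised to $1$, every $a_{ij}$ satisfies $a_{ij} \le \hat\alpha_j \le x_j \le \hat x_j = 1$, so $a_{ij} \wedge \hat x_j = a_{ij} = a_{ij} \wedge x_j$; thus the sums in system (\ref{eq2}) are unchanged and $\hat x \in \mathscr{X}$. To apply Theorem \ref{th4.1}, I consider any $j \in J \setminus J(\hat x)$, i.e. $\hat x_j < 1$; by construction this forces $\hat x_j = x_j < \hat\alpha_j = \max_{i \in I} a_{ij}$, so there exists $i \in I$ with $\hat x_j < a_{ij}$. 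Theorem \ref{th4.1} then gives $\hat x \in \hat{\mathscr{X}}$.

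\medskip

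The argument is essentially two symmetric one-line constructions, each justified by the trivial observation that $a_{ij} \wedge t$ is constant in $t$ once $t \ge a_{ij}$. There is no real obstacle; the only thing to be careful about is that in the maximal-case construction the ``raising'' coordinates are precisely those where $x_j \ge \hat\alpha_j$, which is exactly the condition making every $a_{ij} \wedge x_j$ insensitive to further increases of $x_j$. Remark \ref{re3.1} for the lower bound and Theorem \ref{th4.1} for the upper bound then deliver minimality and maximality without further work.
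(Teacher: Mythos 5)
Your proposal is correct and follows essentially the same route as the paper: the paper likewise truncates $x$ at $\hat\alpha_j$ on the coordinates where $x_j>\hat\alpha_j$ to obtain the minimal solution (via Lemma \ref{le3.1}), and raises $x_j$ to $1$ on the coordinates where $x_j\ge\max_i a_{ij}$ to obtain the maximal one (via Theorem \ref{th4.1}). Your version merely packages the two constructions more uniformly with the $\wedge$ notation instead of splitting off the cases $x\in\check{\mathscr{X}}$ and $x\in\hat{\mathscr{X}}$ first; the underlying argument is identical.
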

\begin{proof} Let $x= (x_1, x_2, \cdots, x_n)\in \mathscr{X}$. If $x\in \check{\mathscr{X}}$, then obviously, there is an $\check{x}\in
\check{\mathscr{X}}$ such that $\check{x}\leq x$.

If $x\notin \check{\mathscr{X}}$, then according to Lemma \ref{le3.1}, for any $ i \in I$ there is a $j \in J$ such that $x_j > a_{ij}$. Thus
$J_1=\{j\in J|x_j>\max\{a_{ij}|i\in I\}\}\neq\emptyset$. Define $x^1 = (x^1_1, x^1_2, \cdots, x^1_n)$ with $$x_j^1 = \left\{ \begin{array}{l}
\max\{a_{ij}|i\in I\}, j\in J_1,\\
x_j, j\in J\setminus J_1.
\end{array} \right.$$
Obviously, $x^1<x$. For any $i\in I$, we have
\begin{eqnarray*}
\sum\limits_{j \in J} a_{ij}\wedge x^1_j &=& \sum\limits_{j\in J_1}a_{ij}\wedge x^1_j + \sum\limits_{j\in J\setminus J_1} a_{ij}\wedge x_j \\
&=&\sum\limits_{j\in J_1}a_{ij} + \sum\limits_{j\in J\setminus J_1} a_{ij}\wedge x_j \\ &=& \sum\limits_{j\in J_1}a_{ij}\wedge x_j + \sum\limits_{j\in
{J\setminus J_1}} a_{ij}\wedge x_j\\ &=& b_i.
\end{eqnarray*}
Hence, $x^1\in \mathscr{X}$. From the structurer of $x^1$ one can verify that for any $j \in J$ there is an $i\in I$ such that $x^1_j \leq a_{ij}$. By
Lemma \ref{le3.1}, $x^1\in \check{\mathscr{X}}$ and $x^1<x$.

Consider the $x= (x_1, x_2, \cdots, x_n)\in \mathscr{X}$ again.  If $x\in \hat{\mathscr{X}}$, then obviously, there is an $\hat{x}\in \hat{\mathscr{X}}$
such that $x\leq \hat{x}$.

If $x\notin\hat{\mathscr{X}}$, then by Remark \ref{rek4} $J(x)\neq J$, and by Theorem \ref{th4.1}, for any $ i \in I$ there is a $j \in J\setminus J(x)$
such that $x_j \geq a_{ij}$. Thus $J_2=\{j\in J\setminus J(x)|x_j\geq\max\{a_{ij}|i\in I\}\}\neq\emptyset$. Define $x^2 = (x^2_1, x^2_2, \cdots, x^2_n)$
with $$x_j^2 = \left\{ \begin{array}{l}
1, j\in J_2,\\
x_j, j\in J\setminus J_2.
\end{array} \right.$$
Obviously, $x<x^2$. For any $i\in I$, we have
\begin{eqnarray*}
\sum\limits_{j\in J} a_{ij}\wedge x^2_j &=& \sum\limits_{j\in J_2}a_{ij}\wedge x^2_j + \sum\limits_{j \in {J\setminus J_2}} a_{ij}\wedge x_j \\
&=&\sum\limits_{j\in J_2}a_{ij} + \sum\limits_{j\in {J\setminus J_2}} a_{ikj}\wedge x_j \\ &=& \sum\limits_{j\in J_2}a_{ij}\wedge x_j +
\sum\limits_{j\in {J\setminus J_2}} a_{ij}\wedge x_j\\ &=& b_i.
\end{eqnarray*}
Hence, $x^2\in \mathscr{X}$. One can check that $J(x^2)=J_2\cup (J(x)\cap (J\setminus J_2))$, and for any $j\in J\setminus J(x^2)$ there is an $ i \in
I$ such that $x^2_j < a_{ij}$. From Theorem \ref{th4.1}, $x^2\in \hat{\mathscr{X}}$ and $x<x^2$.

Consequently, there is an $\check{x}\in \check{\mathscr{X}}$ and an $\hat{x}\in \hat{\mathscr{X}}$ such that $\check{x}\leq x\leq \hat{x}$.
\end{proof}

From Proposition \ref{prop3} and Theorem \ref{th4.4}, we have:
 \begin{theorem}\label{th4.5} If $\mathscr{X}\neq\emptyset$, then
  $$\mathscr{X}=\bigcup\limits_{\check{x}\in \check{\mathscr{X}}, \hat{x}\in \hat{\mathscr{X}}}\{x\in [0,1]^n \mid \check{x}\le x \le\hat{x}\}.$$
\end{theorem}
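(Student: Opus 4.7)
The statement is a set equality, so the plan is to prove the two inclusions separately, invoking the two results the paper has just built up (Proposition \ref{prop3} and Theorem \ref{th4.4}).

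For the inclusion $\mathscr{X} \subseteq \bigcup_{\check{x},\hat{x}}\{x : \check{x}\leq x\leq \hat{x}\}$, I would fix an arbitrary $x\in \mathscr{X}$ and apply Theorem \ref{th4.4} to produce some $\check{x}\in \check{\mathscr{X}}$ and $\hat{x}\in \hat{\mathscr{X}}$ with $\check{x}\leq x\leq \hat{x}$. Then $x$ lies in the corresponding indexed piece of the union, so it lies in the union. This direction is essentially a restatement of Theorem \ref{th4.4} and requires no calculation.

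For the reverse inclusion $\bigcup_{\check{x},\hat{x}}\{x : \check{x}\leq x\leq \hat{x}\}\subseteq \mathscr{X}$, I would take any $x$ in the union. By definition of the union, there exist $\check{x}\in \check{\mathscr{X}}\subseteq \mathscr{X}$ and $\hat{x}\in \hat{\mathscr{X}}\subseteq \mathscr{X}$ such that $\check{x}\leq x\leq \hat{x}$. Since $\check{x}$ and $\hat{x}$ are both solutions, Proposition \ref{prop3} gives $x\in \mathscr{X}$ at once.

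There is no real obstacle here: the theorem is essentially a repackaging of Theorem \ref{th4.4} (which provides the nontrivial sandwiching content) and Proposition \ref{prop3} (which supplies the convexity-in-the-pointwise-order of $\mathscr{X}$). The only thing to mention explicitly is that the hypothesis $\mathscr{X}\neq\emptyset$ is what legitimizes the use of Theorem \ref{th4.4}, so that both $\check{\mathscr{X}}$ and $\hat{\mathscr{X}}$ are nonempty and the indexing of the union on the right is meaningful.
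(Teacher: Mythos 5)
Your proof is correct and matches the paper's own reasoning exactly: the paper derives Theorem \ref{th4.5} directly from Theorem \ref{th4.4} (for the forward inclusion) and Proposition \ref{prop3} (for the reverse inclusion), which is precisely the two-inclusion argument you spell out. Nothing is missing.
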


The solution set of the fuzzy relation equations in Example \ref{ex1} is
\begin{eqnarray*}
\mathscr{X} &=&\bigcup\limits_{\check{x}\in \check{\mathscr{X}}, \hat{x}\in \hat{\mathscr{X}}}\{x\in [0,1]^n \mid \check{x}\le x \le\hat{x}\}\\&=&\{(t,
0.9-t, 1-t)|t\in (0.3, 0.4)\}\cup\{(1-t, 1-t, t)|t\in (0.4, 0.5)\}\\&\quad&\cup\{(t, 0.5, 1-t)|t\in [0.4, 0.5]\}\cup \{(0.6, 0.6, 0.4)\leq x \leq(0.6,
1, 0.4)\}\\&\quad&\cup\{(0.3, 0.6, 0.7)\leq x \leq(0.3, 1, 0.7)\}.
\end{eqnarray*}
\section{Conclusions}
This paper first gave two algorithms to find all minimal and maximal solutions of system (\ref{eq2}) if $\mathscr{X}\neq \emptyset$, respectively, and
then showed that the solution set of system (\ref{eq2}) is completely determined by all minimal and maximal solutions. Algorithm \ref{alt1} can
determine a lot of minimal solutions of system (\ref{eq1}) if $\mathscr{X}\neq \emptyset$ since every minimal solution of system (\ref{eq2}) is also a
minimal one of system (\ref{eq1}). In the future, all minimal solutions found by Algorithm \ref{alt1} will be used for studying the optimization
problems with system (\ref{eq2}) as constraints. Also, it is worth pointing out that we do not know the solvable condition of system (\ref{eq2}).

\end{document}